\def\eqref#1{equation~\ref{#1}}
\def\1{\bm{1}}
\def\rvb{{\mathbf{b}}}
\def\rvh{{\mathbf{h}}}
\def\rvm{{\mathbf{m}}}
\def\rvv{{\mathbf{v}}}
\def\rvx{{\mathbf{x}}}
\def\mV{{\bm{V}}}
\def\mX{{\bm{X}}}
\DeclareMathAlphabet{\mathsfit}{\encodingdefault}{\sfdefault}{m}{sl}
\SetMathAlphabet{\mathsfit}{bold}{\encodingdefault}{\sfdefault}{bx}{n}
\def\gD{{\mathcal{D}}}
\def\gM{{\mathcal{M}}}
\def\gN{{\mathcal{N}}}
\def\gS{{\mathcal{S}}}
\def\sR{{\mathbb{R}}}
\newcommand{\Ls}{\mathcal{L}}
\theoremstyle{plain}
\newtheorem{theorem}{Theorem}[section]
\newtheorem{proposition}[theorem]{Proposition}
\theoremstyle{definition}
\theoremstyle{remark}
\newcommand{\ourapp}{NC\xspace}
\newcommand{\ourappplus}{NC$^+$\xspace}
\title{Newton–Cotes Graph Neural Networks:\\ On the Time Evolution of Dynamic Systems}
\author{
  \makecell{Lingbing Guo$^{1,2,3}$\footnotemark[1], Weiqing Wang$^{4}$\thanks{Equal Contribution}, Zhuo Chen$^{1,2,3}$, Ningyu Zhang$^{1,2}$,\\ Zequn Sun$^{5}$, Yixuan Lai$^{1,2,3}$, Qiang Zhang$^{1,3}$\footnotemark[2], and Huajun Chen$^{1,2,3}$\thanks{Correspondence to: \{qiang.zhang.cs, huajunsir\}@zju.edu.cn,}}\\
  $^1$College of Computer Science and Technology, Zhejiang University \\
  $^2$Zhejiang University - Ant Group Joint Laboratory of Knowledge Graph \\
  $^3$ZJU-Hangzhou Global Scientific and Technological Innovation Center \\
  $^4$Department of Data Science \& AI, Monash University \\
  $^5$State Key Laboratory for Novel Software Technology, Nanjing University \\
}
\begin{document}

\maketitle

\begin{abstract}
Reasoning system dynamics is one of the most important analytical approaches for many scientific studies. With the initial state of a system as input, the recent graph neural networks (GNNs)-based methods are capable of predicting the future state distant in time with high accuracy. Although these methods have diverse designs in modeling the coordinates and interacting forces of the system, we show that they actually share a common paradigm that learns the integration of the velocity over the interval between the initial and terminal coordinates. However, their integrand is constant w.r.t. time. Inspired by this observation, we propose a new approach to predict the integration based on several velocity estimations with Newton–Cotes formulas and prove its effectiveness theoretically. Extensive experiments on several benchmarks empirically demonstrate consistent and significant improvement compared with the state-of-the-art methods.

\end{abstract}

\section{Introduction}
Reasoning the time evolution of dynamic systems has been a long-term challenge for hundreds of years \cite{MDhistory1774,MDhistory1967}. Despite that the advances in computer manufacturing nowadays make it possible to simulate long trajectories of complicated systems constrained by multiple force laws, the computational cost still imposes a heavy burden on the scientific communities~\cite{MDSampling1,UnderstandingMD,BoltzmannGenerator,vakser2014protein,wang2018deepmd,torchala2013swarmdock}. 

Recent graph neural networks (GNNs) \cite{GCNs,GAT,vaswani2017attention,liu2019attention,Transformer}-based methods provide an alternative solution to predict the future states directly with only the initial state as input~\cite{EGNN1,EGNN2,EGNN3,RF,EGNN,GMN,EGHN}. Take molecular dynamics (MD)~\cite{UnderstandingMD,dai2021protein,fout2017protein,zeng2020protein,berman2000protein} as an example, the atoms in a molecule can be regarded as nodes with different labels, and thus can be encoded by a GNN. As the input and the target are atomic coordinates, the learning problem becomes a complicated regression task of predicting the coordinate at each dimension of each atom. Furthermore, some directional information (e.g., velocities or forces) are also important features and cannot be directly leveraged especially considering the rotation and translation of molecules in the system. Therefore, the existing works put great effort into the reservation of physical symmetries, e.g., transformation equivariance and geometric constraints~\cite{EGNN,GMN}. The experimental results on a variety of benchmarks also demonstrate the advantages of their methods.

Without loss of generality, the main target of the existing works is to predict the future coordinate $\rvx^{T}$ distant to the given initial coordinate $\rvx^0$, with $\rvx^0$ and the initial velocity $\rvv^0$ as input. Then, the predicted coordinate $\hat{\rvx}^T$ can be written as:
\begin{equation}
    \hat{\rvx}^T =  \rvx^0 + \hat{\rvv}^0 T.
\end{equation}
For a complicated system comprising multiple particles, predicting the velocity term $\hat{\rvv}^0$ rather than the coordinate $\hat{\rvx}^T$ improves the robustness and performance. Specifically, by subtracting the initial coordinate $\rvx^0$, the learning target can be regarded as the normalized future coordinate, i.e., $\hat{\rvv}^0 = (\rvx^{T} - \rvx^{0}) / (T-0)$, which is why all state-of-the-art methods adopt this strategy~\cite{EGNN,GMN}. Nevertheless, the current strategy still has a significant drawback from the view of numerical integration.

\begin{wrapfigure}{r}{0.35\textwidth}
\vskip -0.2in
\centering
\includegraphics[width=0.32\textwidth]{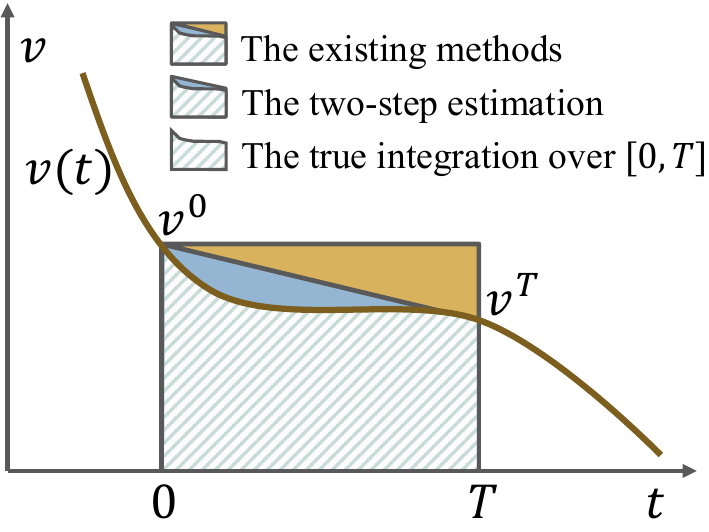}
\vskip -0.10in
\caption{Illustration of different estimations. The rectangle, trapezoid, and striped areas denote the basic (used in the existing works), two-step, and true estimations, respectively. Blue denotes the error of two-step estimation that a model needs to compensate for, whereas blue $+$ yellow denotes the error of the existing methods.}
\label{fig:example}
\vskip -0.2in
\end{wrapfigure}

As illustrated in Figure~\ref{fig:example}, supposed that the actual velocity of a particle is described by the curve $v(t)$. To predict the future state $\rvx^T$ at $t=T$, the existing methods adopt a constant estimation approach, i.e., $\hat{\rvx}^T = \rvx^0 + \hat{\rvv}^0 T =  \rvx^0 + \int_0^T \hat{\rvv}^0 dt $. Evidently, the prediction error could be significant as it purely relies on the fitness of neural models. If we alternatively choose a simple two-step estimation (i.e., Trapezoidal rule~\cite{numericalanalysis}), the prediction error may be reduced to only the blue area. In other words, the model only needs to \emph{compensate} for the blue area.

In this paper, we propose Newton–Cotes graph neural networks (abbr. \ourapp) to estimate multiple velocities at different time points and compute the integration with Newton-Cotes formulas. Newton-Cotes formulas are a series of formulas for numerical integration in which the integrand (in our case, $v(t)$) are evaluated at equally spaced points. One most important characteristic of Newton-Cotes formulas is that the integration is computed by aggregating the values of $v(t)$ at these points with a group of weights $\{w^0, ..., w^k\}$, where $k$ refers to the order of Newton–Cotes formulas and $\{w^0, ..., w^k\}$ is irrelevant to the integrand $v(t)$ and can be pre-computed by Lagrange interpolating polynomial~\cite{interpolation}.

We show that the existing works can be naturally derived to the basic version \ourapp($k=0$) and theoretically prove that the prediction error of this estimation as well as the learning difficulty will continually reduce as the increase of estimation step $k$.

To better train a \ourapp(k), we may also need the intermediate velocities as additional supervised data. Although these data can be readily obtained (as the sampling frequency is much higher than our requirement), we argue that the performance suffers slightly even if we do not use them. The model is capable of learning to generate promising velocities to better match the true integration. By contrast, if we do provide these data to train a \ourapp(k), denoted by \ourappplus (k), we will obtain a stronger model that not only achieves high prediction accuracy in conventional tasks, but also produces highly reliable results of long-term consecutive predictions.

We conduct experiments on several datasets ranging from N-body systems to molecular dynamics and human motions~\cite{Motion,MD17,SE3Transformer}, with state-of-the-art methods as baselines~\cite{RF,EGNN,GMN}. The results show that \ourapp improves all the baseline methods with a significant margin on all types of datasets, even without any additional training data. Particularly, the improvement for the method RF~\cite{RF} is greater than $30\%$ on almost all datasets.

\section{Related Works}
In this section, we first introduce the related works using GNNs to learn system dynamics and then discuss more complicated methods that possess the equivariance properties.

\paragraph{Graph Neural Networks for Reasoning Dynamics}
Interaction network~\cite{InteractionNetwork} is perhaps the first GNN model that learns to capture system dynamics. It separates the states and correlations into two different parts and employs GNNs to learn their interactions. This progress is similar to some simulation tools where the coordinate and velocity/force are computed and updated in an alternative fashion. Many followers extend this idea, e.g., using hierarchical graph convolution~\cite{GCNs,GAT,HRN,MeshSimulation} or auto-encoder~\cite{VAE,HVAE1,HVAE2,AutoencoderInteractionSystem} to encode the objects, or leveraging ordinary differential equations for energy conservation~\cite{ODEInteractionSystem}. The above methods usually cannot process the interactions among particles in the original space (e.g., Euclidean space). They instead propose an auxiliary interaction graph to compute the interactions, which is out of our main focus. 

\paragraph{Equivariant Graph Neural Networks}
Recent methods considering physical symmetry and Euclidean equivariance have achieved state-of-the-art performance in modeling system dynamics~\cite{EGNN1,EGNN2,EGNN3,RF,EGNN,GMN,SEGNN}. Specifically, \cite{EGNN1,EGNN2,EGNN3} force the translation equivariance, but the rotation equivariance is overlooked. TFN~\cite{TensorFieldNetwork} further leverages spherical filters to achieve rotation equivariance. SE(3) Transformer  ~\cite{SE3Transformer} proposes to use the Transformer~\cite{Transformer} architecture to model 3D cloud points directly in Euclidean space. EGNN~\cite{EGNN} simplifies the equivariant graph neural network and make it applicable in modeling system dynamics. GMN~\cite{GMN} proposes to consider the physical constraints (e.g., sticks and hinges sub-structures) widely existed in systems, which achieves the state-of-the-art performance while maintaining the same level of computational cost.  SEGNN~\cite{SEGNN} extends EGNN with steerable vectors to model the covariant information among nodes and edges. Note that, not all EGNNs are designed to reason system dynamics, but the best-performing ones (e.g., EGNN~\cite{EGNN} and GMN~\cite{GMN}) in this sub-area usually regard the velocity $\rvv$ as an important feature. These methods can be set as the backbone model in \ourapp, and the models themselves belong to the basic \ourapp (0).

\paragraph{Neural Ordinary Differential Equations} The neural ordinary differential equation (neural ODE) methods~\cite{NeuralODE,GDE} parameterize ODEs with neural layers and solve them by numerical solvers such as 4th order Runge-Kutta. These solvers are originally designed for numerical integration. Our work draws inspiration from numerical integration algorithms, where the integrand is known only at certain points. The goal is to efficiently approximate the integral to desired precision. Neural ODE methods repeatedly perform the numerical algorithms on small internals to obtain the numerical solution. Therefore, it may be feasible to iteratively perform our method to solve problems in neural ODEs.

\section{Methodology}

We start from preliminaries and then take the standard EGNN~\cite{EGNN} as an example to illustrate how the current deep learning methods work. We show that the learning paradigm of the existing methods can be derived to the simplest form of numerical integration. Finally, we propose \ourapp and theoretically prove its effectiveness in predicting future states. 

\subsection{Preliminaries}
We suppose that a system comprises $N$ particles $p_1, p_2, ..., p_N$, with the velocities $\rvv_1, \rvv_2, ..., \rvv_N$ and the coordinates $\rvx_1, \rvx_2, ..., \rvx_N$ as states. The particles can also carry various scalar features (e.g., mass and charge for atoms) which will be encoded as embeddings. We follow the corresponding existing works \cite{RF,EGNN,GMN} to process them and do not discuss the details in this paper.

\paragraph{Reasoning Dynamics} Reasoning system dynamics is a classical task with a very long history \cite{MDhistory1774}. One basic tool for reasoning or simulating a system is numerical integration. Given the dynamics (e.g., Langevin dynamics, well-used in MD) and initial information, the interaction force and acceleration for each particle in the system can be estimated. However, the force is closely related to the real-time coordinate, which means that one must re-calculate the system states for every very small time step (i.e., $dt$) to obtain a more accurate prediction for a long time interval. For example, in MD simulation, the time step $dt$ is usually set to $50$ or $100$ fs ($1$ 
fs $=10^{-15}$ second), whereas the total simulation time can be $1$ ms ($10^{-3}$ second). Furthermore, pursuing highly accurate results usually demands more complicated dynamics, imposing heavy burdens even for super-computers. Therefore, leveraging neural models to directly predict the future state of the system has recently gained great attention.

\subsection{EGNN}
Although the existing EGNN methods have great advantage in computational cost over the conventional simulation tools, the potential prediction error that a neural model needs to compensate for is also huge. Take the standard EGNN \cite{EGNN} as an example, the equivariant graph convolution can be written as follows:
\begin{align}
    \label{eq:egnn_init}
    \rvm_{ij}^0 = \phi_e (\rvh_i, \rvh_j, ||\rvx_i^0 - \rvx_j^0||^2, e_{ij}),
\end{align}
where the aggregation function $\phi_e$ takes three types of information as input: the feature embeddings $\rvh_i$ and $\rvh_j$ for the input particle $p_i$ and an arbitrary particle $p_j$, respectively; the Euclidean distance $||\rvx_i^0 - \rvx_j^0||^2$ at time $t=0$; and the edge attribute $e_{ij}$. Then, the predicted velocity and future coordinate can be calculated by the following equation:
\begin{align}
    \hat{\rvv}_i^{0} &= \phi_v(\rvh_i)\rvv_i^0 + \frac{1}{N-1}\sum_{j\neq i }{(\rvx_i^0- \rvx_j^0)\rvm_{ij}^0},\\
    \hat{\rvx}_i^{T} &= \rvx_i^0 + \hat{\rvv}_i^{0}T,
    \label{eq:egnn}
\end{align}
where $\phi_v: \sR^D \rightarrow \sR^1$ is a learnable function. From the above equations, we can find that the predicted velocity $\hat{\rvv}_i^{0}$ is also correlated with three types of information: 1) the initial velocity $\rvv_i^0$ as a basis; 2) the pair-wise Euclidean distance $||\rvx_i^0 - \rvx_j^0||^2$ (i.e., $\rvm_{ij}^0$) to determine the amount of force (analogous to Coulomb's law); and the pair-wise relative coordinate difference $(\rvx_i^0- \rvx_j^0)$ to determine the direction. For multi-layer EGNN and other EGNN models, $\hat{\rvv}_i^{0}$ is still determined by these three types of information, which we refer interested readers to Appendix~\ref{app:other_egnn} for details. 
\begin{proposition}[Linear mapping]
\label{prop:linear_mapping}
The existing methods learn a linear mapping from the initial velocity $\rvv^0$ to the average velocity $\rvv^{t^*}$ over the interval $[0,T]$.
\end{proposition}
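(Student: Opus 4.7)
The plan is to decompose the proposition into two independent sub-claims: (i) identifying the regression target of the existing methods with an average velocity on $[0,T]$, and (ii) verifying that the functional form of $\hat{\rvv}^0$ given by Eq.~(2) is affine (and hence linear modulo a $\rvv^0$-independent bias) in the initial velocity $\rvv^0$.

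For (i), I would start from the observation already made in the excerpt that the training signal forces $\hat{\rvv}^0$ to approximate $(\rvx^T-\rvx^0)/T$. By the mean value theorem for integrals applied to the true velocity $v(t)$, there exists $t^{*}\in[0,T]$ such that
\begin{equation*}
v(t^{*}) \;=\; \frac{1}{T}\int_{0}^{T} v(t)\,dt \;=\; \frac{\rvx^{T}-\rvx^{0}}{T},
\end{equation*}
so the target the model is trained to hit is precisely the average velocity $\rvv^{t^{*}}$ over $[0,T]$. This gives the ``average velocity $\rvv^{t^*}$'' side of the statement.

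For (ii), I would read off the structure of Eq.~(2): setting $c_i := \tfrac{1}{N-1}\sum_{j\neq i}(\rvx_i^0-\rvx_j^0)\rvm_{ij}^0$ and $\alpha_i := \phi_v(\rvh_i)$, we have $\hat{\rvv}_i^{0} = \alpha_i\,\rvv_i^0 + c_i$. Crucially, both $\alpha_i$ and $c_i$ are functions of $\{\rvh_j\}_j$, $\{\rvx_j^0\}_j$ and edge attributes only (via $\rvm_{ij}^0$ in Eq.~(1)), and are independent of $\rvv^0$. Thus, with all non-velocity initial conditions held fixed, the mapping $\rvv^0\mapsto\hat{\rvv}^0$ is affine, i.e., linear up to a constant shift. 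Combining (i) and (ii) yields the proposition for the single-layer EGNN.

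The main obstacle is the scope of ``existing methods'': the claim must cover multi-layer EGNN as well as RF and GMN. I would address this by appealing to Appendix~\ref{app:other_egnn}, where these architectures are shown to propagate the velocity through updates of the same skeleton form $\rvv^{\ell}_i = \alpha_i^{\ell}\,\rvv_i^{\ell-1} + c_i^{\ell}$, with coefficients depending only on $\rvh$, $\rvx^0$ and edges. Composing these affine updates across layers preserves affineness in $\rvv^0$, so the final $\hat{\rvv}^0$ is still a linear-plus-bias function of $\rvv^0$, and step (i) delivers the identification with $\rvv^{t^{*}}$ unchanged. I would flag that ``linear'' here is understood in the usual affine sense modulo the $\rvv^0$-independent contribution, which matches the paper's intended interpretation.
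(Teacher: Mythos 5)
Your parts (i) and (ii) are essentially the paper's own proof. The paper writes the training objective $\sum_{p_i}(\rvx_i^T - \rvx_i^0 - \hat{\rvv}_i^0 T)$, pulls out a factor of $T$ to identify the target with $\rvv_i^{t^*} = (\rvx_i^T - \rvx_i^0)/T$, substitutes Equation~(2), and reads off the result as $w^0\rvv_i^0 + \rvb^0$ with $w^0, \rvb^0$ independent of $\rvv_i^0$ and $t$ --- exactly your $\alpha_i\rvv_i^0 + c_i$ decomposition. Your explicit appeal to the mean value theorem makes precise what the paper leaves implicit in the notation $\rvv^{t^*}$; just note that for a vector-valued velocity there need not be a single $t^*$ valid for all three components simultaneously, so it is cleaner to simply \emph{define} $\rvv^{t^*}$ as the average velocity, which is all the proposition and the paper's proof actually use.

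Where your proposal goes beyond the paper, it misreads what Appendix~A establishes. The paper's proof covers only single-layer EGNN; the appendix derivations do not supply the affine skeleton you attribute to them. For RF, the update is $\hat{\rvv}_i^0 = \text{Norm}(\rvv_i^0)\bigl(\phi_v(\rvh_i)\rvv_i^0 + \tfrac{1}{N-1}\sum_{j\neq i}(\rvx_i^0 - \rvx_j^0)\rvm_{ij}^0\bigr)$: the L2 norm of the input velocity multiplies the entire expression, so the map $\rvv^0 \mapsto \hat{\rvv}^0$ is \emph{not} affine, and your composition-of-affine-maps argument does not apply to it. For GMN, the $\text{FK}$ function may be learnable and need not be linear either. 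Moreover, multi-layer EGNN does not propagate velocity through layers in the form $\rvv_i^{\ell} = \alpha_i^{\ell}\rvv_i^{\ell-1} + c_i^{\ell}$; the appendix states that $\rvx_i^0$ and $\rvv_i^0$ enter every layer as constant features and only the hidden states $\rvh_i^{\ell}$ are updated, so affineness in $\rvv^0$ holds there directly, without any composition. None of this puts you at a disadvantage relative to the paper --- its proposition also says ``existing methods'' while its proof handles only EGNN --- but your claim that the appendix \emph{shows} the affine skeleton for RF and GMN is incorrect as stated, and the RF case is a genuine counterexample to it.
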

\begin{proof}
Please see Appendix~\ref{proof:linear_mapping}
\end{proof}
As the model makes prediction only based on the state and velocity at $t=0$, we assume that $\rvv^{t^*}$ fluctuates around $\rvv^0$ and follows a normal distribution denoted by $\gN_{NC(0)} = (\rvv^0, \sigma^2_{NC(0)} )$. Then, the variance term $\sigma^2_{NC(0)}$ reflects how difficult to train a neural model on data sampled from this distribution. In other words, the larger the variance is, the worse the expected results are.

\begin{proposition}[Variance of $\gN_{NC(0)}$]
Assume that the average velocity $\rvv^{t^*}$ over $[0, T]$ follows a normal distribution with $\mu=\rvv^0$, then the variance of the distribution is:
\begin{align}
    \sigma^2_{NC(0)} = \mathcal{O}(T^2)
\end{align}
\end{proposition}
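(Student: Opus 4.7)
The plan is to identify the variance with the squared quadrature error of the zeroth-order Newton--Cotes (left-endpoint rectangle) rule, divided by $T^2$, and then to Taylor-expand to extract the asymptotic order in $T$. First I would make the average velocity explicit by its defining integral
\begin{equation}
\rvv^{t^*} \;=\; \frac{1}{T}\int_{0}^{T} v(t)\,dt,
\end{equation}
so that the centered deviation from the posited mean $\rvv^0$ is
\begin{equation}
\rvv^{t^*} - \rvv^0 \;=\; \frac{1}{T}\Big(\int_{0}^{T} v(t)\,dt - v(0)\,T\Big) \;=\; \frac{\epsilon_{\text{NC}(0)}}{T},
\end{equation}
where $\epsilon_{\text{NC}(0)}$ is, by definition, the error of approximating $\int_0^T v(t)\,dt$ by the single sample $v(0)\,T$. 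Because the hypothesis places $\rvv^{t^*}$ in a normal distribution centered at $\rvv^0$, the variance equals the expected squared deviation, which gives the first identity $\sigma^2_{\text{NC}(0)} = \epsilon_{\text{NC}(0)}^{2}/T^{2}$ directly.

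Second, for the $\mathcal{O}(T^2)$ rate I would invoke a standard Taylor-with-remainder argument, assuming $v \in C^1([0,T])$ with $\|v'\|_\infty \le M$ on the relevant interval (an assumption I would state explicitly). Writing $v(t) = v(0) + \int_0^t v'(s)\,ds$ and integrating over $[0,T]$ yields
\begin{equation}
\epsilon_{\text{NC}(0)} \;=\; \int_0^T\!\!\int_0^t v'(s)\,ds\,dt,
\end{equation}
from which $|\epsilon_{\text{NC}(0)}| \le \tfrac{1}{2} M T^{2}$ (equivalently, the classical rectangle-rule bound). Substituting into the identity above gives $\sigma^2_{\text{NC}(0)} \le \tfrac{1}{4} M^2 T^{2} = \mathcal{O}(T^2)$, completing the second equality.

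The hard part will be reconciling the deterministic quadrature-error formula with the probabilistic statement: the variance is nominally a scalar describing the spread of $\rvv^{t^*}$ across the data-generating distribution of trajectories, while the rectangle-rule error depends on the particular $v(t)$. My plan is to treat the quadrature error as a random variable induced by a random trajectory and to read the proposition as the natural worst-case-in-$T$ bound $\mathbb{E}[\epsilon_{\text{NC}(0)}^2]/T^2 = \mathcal{O}(T^2)$, which follows once $\|v'\|_\infty$ is uniformly bounded over the ensemble. I would flag this uniform-regularity assumption up front, since without it the $\mathcal{O}(T^2)$ asymptotic in $T$ cannot be decoupled from the distribution over $v$. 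Everything else reduces to the identity in the first paragraph together with a one-line Taylor remainder bound.
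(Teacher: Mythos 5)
Your proposal is correct and follows essentially the same route as the paper's proof: both identify the deviation $\rvv^{t^*} - \rvv^0$ with the rectangle-rule (degree-zero Newton--Cotes) quadrature error divided by $T$, and then bound that error by $\mathcal{O}(T^2)$ using first-derivative regularity of $\rvv(t)$. The only differences are presentational: you use the integral (Taylor) form of the remainder with an explicit uniform bound $\Vert \rvv' \Vert_\infty \le M$ over the ensemble of trajectories, whereas the paper uses the Lagrange mean-value form $\tfrac{1}{2}\rvv'(\xi)T^2$ of the degree-zero interpolation error and leaves the uniformity across samples implicit.
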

\begin{proof}
According to the definition, $\sigma^2_{NC(0)}$ can be written as follows:
\begin{align}
    \sigma^2_{NC(0)} &=\frac{ \sum_{p}{ (\rvv^{t^*} - \rvv^0)^2 } }{M}
    = \frac{ \sum_{p}{ ((\rvv^{t^*}T - \rvv^0T)/T)^2 } }{M}\\
    &= \frac{ \sum_{p}{  ((\rvx^T - \rvx^0) - \rvv^0T)^2 } }{MT^2}
    = \frac{ \sum_{p}{ (\int_0^T (\rvv(t) - \rvv^0 T) dt)^2 } }{MT^2},
\end{align}
where $M \gg N$ is the number of all samples. The term $(\rvv(t) - \rvv^0  T)$ is a typical (degree $0$) polynomial interpolation error and can be defined as:
\begin{align}
    \epsilon_{\text{IP}(0)}(t) &= \rvv(t) - \rvv^0 T 
     = \rvv(t) - P_k(t) \big\vert k=0\\
                         &= \frac{\rvv^{(k+1)}(\xi)}{(k+1)!} (t-t_0)(t-t_1)...(t-t_k)\big\vert k=0 
                         = \rvv(\xi)t,
\end{align}
where $0 \leq \xi \leq T$, and the corresponding integration error is:
\begin{align}
    \epsilon_{\text{NC}(0)} &= \int_0^T \epsilon_{\text{IP}(0)}(t) dt 
                           = \rvv(\xi) \int_0^T t dt 
                           = \frac{1}{2} \rvv(\xi) T^2 = \mathcal{O}(T^2).
\end{align}
Therefore, we obtain the final variance:
\begin{align}
    \sigma^2_{NC(0)} = \frac{ M \mathcal{O}(T^4) }{MT^2} = \mathcal{O}(T^2),
\end{align}
concluding the proof.
\end{proof}

\subsection{\ourapp (k)}
In comparison with the $0$ degree polynomial approximation $\text{NC}(0)$, the high order $\text{NC}(k)$ is more accurate and yields only a little extra computational cost.

Suppose that we now have $K+1$ (usually $K \leq 8$ due to the catastrophic Runge's phenomenon~\cite{runge1901empirische}) points $\rvx_i^0, \rvx_i^1, ..., \rvx_i^K$ equally spaced on time interval $[0, T]$, then the prediction for the future coordinate $\hat{\rvx}_i^{T}$ in Equation~(\ref{eq:egnn}) can be re-written as:
\begin{align}
    \label{eq:ncgnn}
    \hat{\rvx}_i^{T} &= \rvx_i^0 + \frac{T}{K} \sum_{k=0}^K{w^k \hat{\rvv}_i^{k}},
\end{align}
where $\{w^k\}$ is the coefficients of Newton-Cotes formulas and $\hat{\rvv}_i^{k}$ denotes the predicted velocity at time $t^k$. To obtain $\hat{\rvv}_i^{k}, k\geq 1$, we simply regard EGNN as a recurrent model~\cite{RNN,RSN} and re-input the last output velocity and coordinate. Some popular sequence models like LSTM \cite{LSTM} or Transformer \cite{Transformer} may be also leveraged, which we leave to future work. Then, the equation of updating the predicted velocity $\hat{\rvv}_i^{k}$ at $t^k$ can be written as:
\begin{align}
    \label{eq:nc(k)}
    \hat{\rvv}_i^{k} &= \phi_v(\rvh_i)\rvv_i^{k-1} + \frac{\sum_{j\neq i }{(\rvx_i^{k-1} - \rvx_j^{k-1})\rvm_{ij}}}{N-1} ,
\end{align}
where $\rvv_i^{k-1}$, $\rvx_i^{k-1}$ denote the velocity and coordinate at $t^{k-1}$, respectively. Note that, Equation~(\ref{eq:nc(k)}) is different from the form used in a multi-layer EGNN. The latter always uses the same input velocity and coordinate as constant features in different layers. By contrast, we first predict the next velocity and coordinate and then use them as input to get the new trainable velocity and coordinate. This process is more like training a language model~\cite{Transformer,RNN,LanguageModel,LSTM,Translation}.

From the interpolation theorem \cite{interpolation,NC}, there exists a unique polynomial $\sum_{k=0}^K{C^k t^{(k)}}$ of degree at most $K$ that interpolates the $K+1$ points, where ${C^k}$ is the coefficients of the polynomial. To avoid ambiguity, we here use $t^{(k)}$ to denote $t$ raised to the power of $k$, whereas $t^k$ to denote the $k$-th time point. $\sum_{k=0}^K{C^k t^{(k)}}$ thus can be constructed by the following equation:
\begin{align}
\label{eq:interpolation}
    \sum_{k}^K{C^k t^{(k)}} &= [\rvv(t^0)] + [\rvv(t^0), \rvv(t^1)](t-t^0) + ... 
    + [\rvv(t^0), ..., \rvv(t^K)](t-t^0)(t-t^1)...(t-t^{K}),
\end{align}
where $[\cdot]$ denotes the divided difference.
In our case, the $K+1$ points are equally spaced over the interval $[0, T]$. According to Newton-Cotes formulas, we will have:
\begin{align}
     \int_0^T \rvv(t)dt &\approx \int_0^T \sum_{k=0}^K{C_k t^{(k)}} dt
                        = \frac{T}{K} \sum_{k=0}^K{w^k \rvv^{k}}.
\end{align}
Particularly, $\{w^k\}$ are Newton-Cotes coefficients that are irrelevant to $\rvv(t)$. With additional observable points, the estimation for the integration can be much more accurate. The objective thus can be written as follows:
\begin{align}
    \label{eq:nc_obj}
    \operatorname*{argmin}_{ \{\hat{\rvv}^{k}\} } & \sum_{p}{ \rvx^T - \hat{\rvx}^{T} }
    = \sum_{p}{ \rvx^T - \rvx^0 - \frac{T}{K}\sum_{k=0}^K{w^k \hat{\rvv}^{k}} }.
\end{align}

\begin{proposition}[Variance of $\sigma^2_{NC(k)}$]
\label{prop:nck}
$\forall k \in \{0, 1, ...\}, \epsilon_{\text{NC}(k)} \geq \epsilon_{\text{NC}(k+1)}$, and consequently:
\begin{align}
    \sigma^2_{NC(0)} \geq \sigma^2_{NC(1)} \geq ... \geq \sigma^2_{NC(k)}.
\end{align}
\end{proposition}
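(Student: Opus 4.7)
The plan is to mirror the calculation already carried out for $\sigma^2_{NC(0)}$ but for a general order $k$, so that the whole claim reduces to showing $\epsilon_{\text{NC}(k)} \geq \epsilon_{\text{NC}(k+1)}$; the variance chain then follows by the identity $\sigma^2_{NC(k)} = \epsilon_{\text{NC}(k)}^2 / T^2$ (obtained by the same algebraic manipulation, $\rvv^{t^*} - \rvv^0 = (\rvx^T - \rvx^0)/T - \rvv^0$, and replacing $\rvx^T - \rvx^0$ with its Newton--Cotes approximation plus residual).

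First I would recall the standard polynomial interpolation error at $K+1$ equally spaced nodes on $[0,T]$: there exists $\xi \in [0,T]$ with
\begin{equation*}
\epsilon_{\text{IP}(k)}(t) = \rvv(t) - P_k(t) = \frac{\rvv^{(k+1)}(\xi)}{(k+1)!}\,(t-t^0)(t-t^1)\cdots(t-t^k).
\end{equation*}
Integrating termwise over $[0,T]$ yields the Newton--Cotes quadrature error, whose leading-order size is $\mathcal{O}(T^{k+2})$ for odd $k$ and $\mathcal{O}(T^{k+3})$ for even $k$ (the extra order for even $k$ comes from symmetry of the node product about the midpoint). The case $k=0$ already computed in the paper, $\epsilon_{\text{NC}(0)} = \tfrac{1}{2}\rvv(\xi)T^2$, is precisely the endpoint of this chain.

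Next I would argue that passing from $k$ to $k+1$ strictly shrinks this error. The cleanest route is to use the Peano kernel representation $\epsilon_{\text{NC}(k)} = \int_0^T K_k(s)\,\rvv^{(k+1)}(s)\,ds$: the Peano kernel for the Newton--Cotes rule vanishes on all polynomials of degree $\leq k$ (and $\leq k+1$ in the even-$k$ case), so any additional node annihilates one more degree, giving a strictly smaller leading power of $T$. Combined with the bound $|\epsilon_{\text{NC}(k)}| \leq C_k T^{k+2}\|\rvv^{(k+1)}\|_\infty$ and the operating regime of interest ($T$ small enough and $\rvv$ sufficiently smooth), we obtain the monotonicity $\epsilon_{\text{NC}(k)} \geq \epsilon_{\text{NC}(k+1)}$.

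Finally I would conclude by plugging this monotonic error back into
\begin{equation*}
    \sigma^2_{NC(k)} = \frac{\sum_p \bigl(\int_0^T \rvv(t)\,dt - \tfrac{T}{K}\sum_{j=0}^k w^j \rvv^{j}\bigr)^2}{M T^2} = \frac{\epsilon_{\text{NC}(k)}^2}{T^2},
\end{equation*}
so the variance inequality is immediate. The main obstacle I anticipate is the monotonicity step itself: Newton--Cotes errors are \emph{not} unconditionally decreasing in $k$ for arbitrary $\rvv$ (this is the essence of Runge's phenomenon, which the paper itself cites to justify $K\leq 8$). A fully rigorous argument therefore needs either a smoothness/analyticity assumption on $\rvv(t)$ with derivative bounds that do not grow too fast with $k$, or a local assumption that $T$ is in the regime where the Peano-kernel leading-order term dominates. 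I would flag this smoothness hypothesis explicitly before invoking the inequality, since everything else in the proof is routine interpolation-theory bookkeeping.
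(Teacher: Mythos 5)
Your proposal is correct in substance and is actually more general and more careful than the paper's own proof. The paper reduces the whole chain to the single step $\epsilon_{\text{NC}(0)} \geq \epsilon_{\text{NC}(1)}$, asserting that the cases $k\geq 1$ are ``already proved'' (i.e., deferring to standard Newton--Cotes error theory), and then verifies that one step by an explicit Trapezoidal-rule computation: the degree-$1$ interpolation error is integrated (via the substitution $s=(t-t^0)/T$) to $-\tfrac{1}{12}T^3\rvv''(\xi)$, giving $\sigma^2_{NC(1)} = \mathcal{O}(T^4) \leq \mathcal{O}(T^2) = \sigma^2_{NC(0)}$. You instead treat general $k$ directly, using the same reduction from interpolation error to quadrature error but adding the parity distinction ($\mathcal{O}(T^{k+2})$ for odd $k$, $\mathcal{O}(T^{k+3})$ for even $k$) and a Peano-kernel argument for the $k \to k+1$ step; this buys a self-contained proof of the entire chain rather than a base case plus an appeal to the literature, at the cost of heavier machinery. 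More importantly, your closing caveat pinpoints the real weakness shared by both arguments: comparing $\mathcal{O}(T^4)$ with $\mathcal{O}(T^2)$ is an asymptotic statement as $T\to 0$, not a pointwise inequality for a fixed system, so the claimed monotonicity genuinely requires a smoothness hypothesis on $\rvv$ (with derivative bounds that do not grow too fast in $k$) or a small-$T$ regime --- precisely the Runge's-phenomenon issue the paper itself cites to justify $K \leq 8$ but never reconciles with this proposition. The paper's proof silently carries the same gap; you are right to surface it as an explicit assumption rather than paper over it.
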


\begin{proof}
    Please see Appendix \ref{proof:nck} for details. Briefly, we only need to prove that $\epsilon_{\text{NC}(0)} \geq \epsilon_{\text{NC}(1)}$, where \ourapp (1) is associated with Trapezoidal rule.
\end{proof}

\begin{proposition}[Equivariance of \ourapp]
\label{prop:equivariance}
\ourapp possesses the equivariance property if the backbone model $\gM$ (e.g., EGNN) in \ourapp possesses this property.
\end{proposition}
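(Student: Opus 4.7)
The plan is induction on the step index $k$, exploiting the fact that the only new operations \ourapp performs beyond the backbone $\gM$ are (a) a scalar-weighted sum of velocity estimates and (b) an addition of this sum to the initial coordinate; both respect the coordinate-versus-velocity type distinction that underlies $E(d)$-equivariance. First I would fix notation for the symmetry action: for $g=(R,\vec{t})$ with $R\in O(d)$ and $\vec{t}\in\mathbb{R}^d$, let coordinates transform as $g\cdot\rvx=R\rvx+\vec{t}$ and velocity-type vectors as $g\cdot\rvv=R\rvv$, and restate the backbone-equivariance hypothesis as: whenever the coordinate inputs of $\gM$ are shifted by $g$ and its velocity inputs are rotated by $R$, its output velocity rotates by $R$ and its output coordinate is shifted by $g$.

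Next I would set up two induction invariants for $k=0,1,\dots,K$: (i) $\hat{\rvx}_i^{k}(g\cdot\text{input}) = g\cdot\hat{\rvx}_i^{k}(\text{input})$, and (ii) $\hat{\rvv}_i^{k}(g\cdot\text{input}) = R\,\hat{\rvv}_i^{k}(\text{input})$. The base case $k=0$ is exactly the backbone-equivariance hypothesis applied to the initial state $(\rvx^0,\rvv^0)$. For the inductive step, the recurrence in Equation~(\ref{eq:nc(k)}) produces $\hat{\rvv}_i^{k}$ by feeding $\gM$ the previous iterate $(\hat{\rvx}^{k-1},\hat{\rvv}^{k-1})$; by the inductive hypothesis these are respectively $g$- and $R$-transformed under $g\cdot\text{input}$, so the hypothesis on $\gM$ yields invariant (ii) at step $k$. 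Invariant (i) then follows because the coordinate update has the form $\hat{\rvx}_i^{k}=\hat{\rvx}_i^{k-1}+\tau\hat{\rvv}_i^{k-1}$ for the scalar step $\tau=T/K$, and the identity $R\rvx+\vec{t}+\tau R\rvv = R(\rvx+\tau\rvv)+\vec{t}$ shows that an affine coordinate plus a linear velocity remains an affine coordinate under $g$.

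Finally I would assemble the overall prediction. Applying $g$ to the input transforms Equation~(\ref{eq:ncgnn}) into
\begin{equation*}
(R\rvx_i^0+\vec{t})+\frac{T}{K}\sum_{k=0}^{K} w^k R\,\hat{\rvv}_i^{k} = R\!\left(\rvx_i^0+\frac{T}{K}\sum_{k=0}^{K} w^k \hat{\rvv}_i^{k}\right)+\vec{t} = g\cdot\hat{\rvx}_i^{T},
\end{equation*}
since each Newton-Cotes weight $w^k$ is a real scalar that commutes with $R$ and does not interact with $\vec{t}$. This is exactly the required equivariance of $\hat{\rvx}_i^{T}$ under $g$, so the composite \ourapp model inherits the backbone's equivariance.

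The only subtle point, which I regard as the main obstacle, is being precise about the type of each intermediate quantity as it passes through the recurrence: one must confirm that the update producing $\hat{\rvx}_i^{k}$ from $\hat{\rvv}_i^{k-1}$ adds a velocity (linear, $R$-covariant) to a coordinate (affine, $(R,\vec{t})$-covariant) rather than mixing the two, so that the translation $\vec{t}$ is propagated only through the coordinate channel. Once this type discipline is verified, everything else reduces to the scalarity of the $w^k$ and a direct appeal to the backbone hypothesis.
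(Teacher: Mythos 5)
Your proof is correct, and it rests on the same two pillars as the paper's own argument --- equivariance of the backbone $\gM$ plus the fact that the Newton--Cotes weights are input-independent scalars --- but your route through those pillars is genuinely more rigorous, in two places where the paper's version is only a sketch. First, the paper treats \ourapp as a one-shot ``weighted combination of the outputs of $\gM$,'' writing $\sum_i w_i \gM(T_g(\rvx_i)) = \sum_i w_i S_g(\gM(\rvx_i))$ and essentially stopping there; it never addresses the fact that in \ourapp (k) the inputs to later evaluations of $\gM$ are themselves outputs of earlier evaluations (Equation~(\ref{eq:nc(k)})). Your induction on the step index, with the pair of invariants for $\hat{\rvx}^k$ and $\hat{\rvv}^k$, is exactly the missing step that makes the recurrent structure rigorous: it is the statement that a composition of equivariant maps is equivariant, which the paper invokes only implicitly (``\ourapp directly feeds the input into these backbone models and naturally possesses this property''). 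Second, the paper's displayed identity silently requires the abstract output transformation $S_g$ to commute with an arbitrary weighted sum; that commutation is automatic for linear actions (rotations) but fails for the translation component unless the weights are suitably normalized, and the paper never distinguishes the two. Your ``type discipline'' --- velocities transform only by $R$, while the translation $\vec{t}$ enters solely through the single coordinate term $\rvx_i^0$ in Equation~(\ref{eq:ncgnn}) --- is precisely what dissolves this issue, since the weighted sum is taken over velocity-type quantities and the final identity $(R\rvx_i^0+\vec{t})+\frac{T}{K}\sum_k w^k R\hat{\rvv}_i^k = R\hat{\rvx}_i^T+\vec{t}$ then holds without any condition on the $w^k$. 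The only thing you omit relative to the paper is permutation equivariance, which it mentions in passing; your argument covers it with no change, since the $w^k$ are also independent of the particle index. In short: same skeleton, but your version closes two gaps the paper leaves open, at the cost of the paper's brevity.
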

\begin{proof}
    Please see Appendix~\ref{proof:equivariance}.
\end{proof}

\subsection{\ourapp (k) and \ourappplus (k)}
In the previous formulation, in addition to the initial and terminal points, we also need the other $K-1$ points for supervision which yields a regularization loss:
\begin{align}
    \label{eq:vel_reg}
    \Ls_r = \sum_p \sum_{k=0}^K \Vert \rvv^k- \hat{\rvv}^k \Vert.
\end{align}
We denote \ourapp (k) with the above loss by \ourappplus (k). Minimizing $\Ls_r$ is equivalent to minimizing the difference between the true velocity and predicted velocity at each time point $t^k$ for each particle. $\Vert \cdot \Vert$ can be arbitrary distance measure, e.g., L2 distance. Although the $K-1$ points of data can be easily accessed in practice, we argue that a loose version (without $\Ls_r$) \ourapp~(k) can also learn promising estimation of the integration $\int_0^T \rvv(t)dt$. Specifically, we still use $K+1$ points $\{\hat{\rvv}^k\}$ to calculate the integration over $[0, T]$, except that the intermediate $K-1$ points are unbounded. The model itself needs to determine the proper values of $\{\hat{\rvv}^k\}$ to optimize the same objective defined in Equation~\ref{eq:nc_obj}. In Section~\ref{sec:comparison}, we design an experiment to investigate the difference between the true velocities $\{\rvv^{k}\}$ and the predicted velocities $\{\hat{\rvv}^k\}$.

\subsection{Computational Complexity and Limitations}
In comparison with EGNN, \ourapp(k) does not involve additional neural layers or parameters. Intuitively, we recurrently use the last output of EGNN to produce new predicted velocities at next time point, the corresponding computational cost should be $K$ times that of the original EGNN. However, our experiment in Section~\ref{sec:time_step_k} shows that the training time did not actually linearly increase w.r.t. $K$. One reason may be the gradient in back-propagation is still computed only once rather than $K$ times.

We present an implementation of \ourappplus in Algorithm~\ref{alg:NCGNN}. We first initialize all variables in the model and then recurrently input the last output to the backbone model to collect a series of predicted velocities. We then calculate the final predicted integration and plus the initial coordinate as the final output~(i.e., Equation (\ref{eq:ncgnn})). Finally we compute the losses and update the model by back-propagation. 
\begin{algorithm}[t]
	\caption{Newton-Cotes Graph Neural Network}
	\label{alg:NCGNN}
	\begin{algorithmic}[1]
		\STATE {\bfseries Input:} the dataset $\gD$, the backbone model $\gM$, number of steps $k$ for \ourapp;
		\REPEAT
		\FOR{{\bfseries each} batched data in the training set $(\{\mX^0, ..., \mX^k\}, \{\mV^0, ..., \mV^k\} )$} 
        \STATE $\hat{\mX}^0 \leftarrow \mX^0,\quad \hat{\mV}^0 \leftarrow \mV^0$;
        \FOR{$i:=1$ {\bfseries to} $k$}
		\STATE $\hat{\mX}^i, \hat{\mV}^i \leftarrow \gM(\hat{\mX}^{i-1}, \hat{\mV}^{i-1})$;
        \ENDFOR
        \STATE Compute the final prediction $\hat{\mX}^k$ by Equation~\ref{eq:ncgnn};
		\STATE Compute the main prediction loss $\Ls_{main}$ according to Equation~(\ref{eq:nc_obj});
		\STATE Compute the velocity regularization loss $\Ls_r$ according to Equation~(\ref{eq:vel_reg});
		\STATE Minimize $\Ls_{main}$, $\Ls_r$;
		\ENDFOR
	  \UNTIL{the loss on the validation set converges.}
	\end{algorithmic}
\end{algorithm}

\section{Experiment}
We conducted experiments on three different tasks towards reasoning system dynamics. The source code and datasets are available at https://github.com/zjukg/NCGNN.

\subsection{Settings}
We selected three state-of-the-art methods as our backbone models: RF~\cite{RF}, EGNN~\cite{EGNN}, and GMN~\cite{GMN}. To ensure a fair comparison, we followed their best parameter settings (e.g., hidden size and the number of layers) to construct \ourapp. In other words, the main settings of \ourapp and the baselines were identical. The results reported in the main experiments were based on a model of \ourapp (2) for a balance of training cost and performance. 

We reported the performance of \ourapp w/ additional points (denoted by \textbf{\ourappplus}) and w/o additional points (the relaxed version, denoted by \textbf{\ourapp}).

\begin{table*}[t]
  \centering
  \small
  \setlength{\tabcolsep}{1.2pt}
  \caption{Prediction error ($\times 10^{-2}$)  on N-body dataset. The header of each column ``$p,s,h$" denotes the scenario with $p$ isolated particles, $s$ sticks and $h$ hinges. Results averaged across 3 runs.}
  \resizebox{.92\textwidth}{!}{
    \begin{tabular}{lrrrrr|rrrrr}
    \toprule
          & \multicolumn{5}{c|}{$|$Train$|$ = 500}    & \multicolumn{5}{c}{$|$Train$|$ = 1500} \\
          & 1,2,0 & 2,0,1 & 3,2,1 & 0,10,0 & 5,3,3 & 1,2,0 & 2,0,1 & 3,2,1 & 0,10,0 & 5,3,3 \\
    \midrule
    Linear & 8.23\tiny{$\pm$0.00}  & 7.55\tiny{$\pm$0.00}  & 9.76\tiny{$\pm$0.00}  & 11.36\tiny{$\pm$0.00} & 11.62\tiny{$\pm$0.00} & 8.22\tiny{$\pm$0.00}  & 7.55\tiny{$\pm$0.00}  & 9.76\tiny{$\pm$0.00}  & 11.36\tiny{$\pm$0.00} & 11.62\tiny{$\pm$0.00} \\
    GNN~\cite{GCNs}   & 5.33\tiny{$\pm$0.07}  & 5.01\tiny{$\pm$0.08}  & 7.58\tiny{$\pm$0.08}  & 9.83\tiny{$\pm$0.04}  & 9.77\tiny{$\pm$0.02}  & 3.61\tiny{$\pm$0.13}  & 3.23\tiny{$\pm$0.07}  & 4.73\tiny{$\pm$0.11}  & 7.97\tiny{$\pm$0.44}  & 7.91\tiny{$\pm$0.31} \\
    TFN~\cite{TensorFieldNetwork}   & 11.54\tiny{$\pm$0.38} & 9.87\tiny{$\pm$0.27}  & 11.66\tiny{$\pm$0.08} & 13.43\tiny{$\pm$0.31} & 12.23\tiny{$\pm$0.12} & 5.86\tiny{$\pm$0.35}  & 4.97\tiny{$\pm$0.23}  & 8.51\tiny{$\pm$0.14}  & 11.21\tiny{$\pm$0.21} & 10.75\tiny{$\pm$0.08} \\
    SE(3)-Tr.~\cite{SE3Transformer} & 5.54\tiny{$\pm$0.06}  & 5.14\tiny{$\pm$0.03}  & 8.95\tiny{$\pm$0.04}  & 11.42\tiny{$\pm$0.01} & 11.59\tiny{$\pm$0.01} & 5.02\tiny{$\pm$0.03}  & 4.68\tiny{$\pm$0.05}  & 8.39\tiny{$\pm$0.02}  & 10.82\tiny{$\pm$0.03} & 10.85\tiny{$\pm$0.02} \\
    \midrule
    RF~\cite{RF}    & 3.50\tiny{$\pm$0.17}  & 3.07\tiny{$\pm$0.24}  & 5.25\tiny{$\pm$0.44}  & 7.59\tiny{$\pm$0.25}  & 7.73\tiny{$\pm$0.39}  & 2.97\tiny{$\pm$0.15}  & 2.19\tiny{$\pm$0.11}  & 3.80\tiny{$\pm$0.25}  & 5.71\tiny{$\pm$0.31}  & 5.66\tiny{$\pm$0.27} \\
    \ourapp (RF)    & \textbf{2.84\tiny{$\pm$0.01}}  & 2.35\tiny{$\pm$0.04}  & 4.32\tiny{$\pm$0.08}  & \textbf{6.67\tiny{$\pm$0.26}}  & 7.14\tiny{$\pm$0.25}  & 2.91\tiny{$\pm$0.11}  & \textbf{1.76\tiny{$\pm$0.01}}  & \textbf{3.23\tiny{$\pm$0.01}}  & 5.09\tiny{$\pm$0.05}  & 5.34\tiny{$\pm$0.03} \\
    \ourappplus (RF)    & 2.92\tiny{$\pm$0.10}  & \textbf{2.34\tiny{$\pm$0.03}}  & \textbf{4.28\tiny{$\pm$0.05}}  & 7.02\tiny{$\pm$0.14}  & \textbf{7.06\tiny{$\pm$0.33}}  & \textbf{2.87\tiny{$\pm$0.05}}  & 1.78\tiny{$\pm$0.01}  & 3.24\tiny{$\pm$0.02}  & \textbf{5.06\tiny{$\pm$0.06}}  & \textbf{5.23\tiny{$\pm$0.29}} \\
    \midrule
    EGNN~\cite{EGNN}  & 2.81\tiny{$\pm$0.12}  & 2.27\tiny{$\pm$0.04}  & 4.67\tiny{$\pm$0.07}  & 4.75\tiny{$\pm$0.05}  & 4.59\tiny{$\pm$0.07}  & 2.59\tiny{$\pm$0.10}  & 1.86\tiny{$\pm$0.02}  & 2.54\tiny{$\pm$0.01}  & 2.79\tiny{$\pm$0.04}  & 3.25\tiny{$\pm$0.07} \\
    \ourapp (EGNN)    & 2.41\tiny{$\pm$0.03}  & 2.18\tiny{$\pm$0.02}  & 3.53\tiny{$\pm$0.01}  & 4.26\tiny{$\pm$0.03}  & 4.13\tiny{$\pm$0.03}  & \textbf{2.23\tiny{$\pm$0.13}}  & 1.91\tiny{$\pm$0.03}  & \textbf{2.14\tiny{$\pm$0.03}}  & 2.36\tiny{$\pm$0.05}  & \textbf{2.86\tiny{$\pm$0.03}} \\
    \ourappplus (EGNN) & \textbf{2.25\tiny{$\pm$0.01}}  & \textbf{2.07\tiny{$\pm$0.06}}  & \textbf{2.01\tiny{$\pm$0.02}}  & \textbf{3.54\tiny{$\pm$0.06}}  & \textbf{3.96\tiny{$\pm$0.04}}  & 2.32\tiny{$\pm$0.05}  & \textbf{1.86\tiny{$\pm$0.13}}  & 2.39\tiny{$\pm$0.01}  & \textbf{2.35\tiny{$\pm$0.07}}  & 2.99\tiny{$\pm$0.02} \\
    \midrule
    GMN~\cite{GMN}   & 1.84\tiny{$\pm$0.02}  & 2.02\tiny{$\pm$0.02} & 2.48\tiny{$\pm$0.04} & 2.92\tiny{$\pm$0.04}  & 4.08\tiny{$\pm$0.03} & 1.68\tiny{$\pm$0.04}   & 1.47\tiny{$\pm$0.03}   & 2.10\tiny{$\pm$0.04}   & 2.32\tiny{$\pm$0.02}   & 2.86\tiny{$\pm$0.01} \\
    \ourapp (GMN)    & \textbf{1.55\tiny{$\pm$0.07}}  & 1.58\tiny{$\pm$0.02}  & 2.07\tiny{$\pm$0.03}  & 2.73\tiny{$\pm$0.02}  & 2.99\tiny{$\pm$0.03}  & 1.59\tiny{$\pm$0.03}  & 1.18\tiny{$\pm$0.05}  & 1.47\tiny{$\pm$0.01}  & \textbf{1.66\tiny{$\pm$0.01}}  & 1.93\tiny{$\pm$0.03} \\
    \ourappplus (GMN)   & 1.57\tiny{$\pm$0.02}  & \textbf{1.43\tiny{$\pm$0.02}} & \textbf{2.03\tiny{$\pm$0.04}} & \textbf{2.57\tiny{$\pm$0.04}}  & \textbf{2.72\tiny{$\pm$0.03}} & \textbf{1.49\tiny{$\pm$0.02}}   & \textbf{1.17\tiny{$\pm$0.04}}   & \textbf{1.44\tiny{$\pm$0.01}}   & 1.70\tiny{$\pm$0.06}   & \textbf{1.91\tiny{$\pm$0.02}} \\
    \bottomrule
    \end{tabular}%
    }
  \label{tab:nbody}%
\end{table*}%

\begin{table*}[t]
  \centering
  \caption{Prediction error ($\times 10^{-2}$) on MD17 dataset. Results averaged across 3 runs.}
  \resizebox{\textwidth}{!}{
    \begin{tabular}{lrrrrrrrrrrrr}
    \toprule
    & \multicolumn{3}{c}{Aspirin} & \multicolumn{3}{c}{Benzene} & \multicolumn{3}{c}{Ethanol} & \multicolumn{3}{c}{Malonaldehyde}\\ 
    \cmidrule(lr){2-4} \cmidrule(lr){5-7} \cmidrule(lr){8-10} \cmidrule(lr){11-13}  & RF & EGNN & GMN & RF & EGNN & GMN & RF & EGNN & GMN& RF & EGNN & GMN\\
    \midrule
    Raw    & 10.94\tiny{$\pm$0.01}   & 14.41\tiny{$\pm$0.15} &  10.14\tiny{$\pm$0.03} & 103.72\tiny{$\pm$1.29} & 62.40\tiny{$\pm$0.53} & 48.12\tiny{$\pm$0.40}   & 4.64\tiny{$\pm$0.01}  & 4.64\tiny{$\pm$0.01} & \textbf{4.83\tiny{$\pm$0.01}}   & 13.93\tiny{$\pm$0.03}  & 13.64\tiny{$\pm$0.01} & 13.11\tiny{$\pm$0.03} \\
    \ourapp  & \textbf{10.87\tiny{$\pm$0.01}}  & 9.63\tiny{$\pm$0.01} &  9.50\tiny{$\pm$0.06}  & 63.22\tiny{$\pm$0.01}  & 55.05\tiny{$\pm$1.60} & 41.62\tiny{$\pm$1.16}  & \textbf{4.62\tiny{$\pm$0.01}}  & 4.63\tiny{$\pm$0.01} & \textbf{4.83\tiny{$\pm$0.01}}  &  \textbf{12.93\tiny{$\pm$0.03}}  & 12.82\tiny{$\pm$0.01} & 12.97\tiny{$\pm$0.01}   \\
    \ourappplus   & \textbf{10.87\tiny{$\pm$0.01}}  & \textbf{9.60\tiny{$\pm$0.02}} & \textbf{9.40\tiny{$\pm$0.09}}  & \textbf{63.04\tiny{$\pm$0.05}}  & \textbf{54.76\tiny{$\pm$0.74}} & \textbf{41.26\tiny{$\pm$0.15}}   & \textbf{4.62\tiny{$\pm$0.01}}  & \textbf{4.62\tiny{$\pm$0.01}} & \textbf{4.83\tiny{$\pm$0.01}}  & \textbf{12.93\tiny{$\pm$0.03}}  & \textbf{12.81\tiny{$\pm$0.02}}  & \textbf{12.92\tiny{$\pm$0.01}} \\
    \midrule
    & \multicolumn{3}{c}{Naphthalene} & \multicolumn{3}{c}{Salicylic} & \multicolumn{3}{c}{Toluene} & \multicolumn{3}{c}{Uracil}\\
    \cmidrule(lr){2-4} \cmidrule(lr){5-7} \cmidrule(lr){8-10} \cmidrule(lr){11-13} & RF & EGNN & GMN & RF & EGNN & GMN & RF & EGNN & GMN& RF & EGNN & GMN\\
    \midrule
    Raw    & 0.50\tiny{$\pm$0.01}  & 0.47\tiny{$\pm$0.02} & 0.40\tiny{$\pm$0.01}   & 1.23\tiny{$\pm$0.01} & 1.02\tiny{$\pm$0.02} & 0.91\tiny{$\pm$0.01}    & 10.93\tiny{$\pm$0.04} & 11.78\tiny{$\pm$0.07} & 10.22\tiny{$\pm$0.08}                        & 0.64\tiny{$\pm$0.01} & 0.64\tiny{$\pm$0.01} & 0.59\tiny{$\pm$0.01}    \\
    \ourapp  & \textbf{0.39\tiny{$\pm$0.01}} & \textbf{0.37\tiny{$\pm$0.01}} & 0.38\tiny{$\pm$0.01}   & \textbf{1.09\tiny{$\pm$0.01}} & \textbf{0.84\tiny{$\pm$0.01}} & \textbf{0.86\tiny{$\pm$0.01}}   & \textbf{10.85\tiny{$\pm$0.01}} & 10.64\tiny{$\pm$0.11} & 10.17\tiny{$\pm$0.04}                          &  \textbf{0.60\tiny{$\pm$0.01}} & \textbf{0.57\tiny{$\pm$0.01}} & \textbf{0.56\tiny{$\pm$0.01}}   \\
    \ourappplus   & \textbf{0.39\tiny{$\pm$0.01}} &  \textbf{0.37\tiny{$\pm$0.01}} &  \textbf{0.37\tiny{$\pm$0.02}}   & \textbf{1.09\tiny{$\pm$0.01}} &  \textbf{0.84\tiny{$\pm$0.01}} &  \textbf{0.86\tiny{$\pm$0.01}}   & \textbf{10.85\tiny{$\pm$0.01}} &  \textbf{10.52\tiny{$\pm$0.01}} &  \textbf{10.14\tiny{$\pm$0.01}}   & \textbf{0.60\tiny{$\pm$0.01}} &  \textbf{0.57\tiny{$\pm$0.01}} &  \textbf{0.56\tiny{$\pm$0.02}}   \\
    \bottomrule
    \end{tabular}
    }
  \label{tab:md17}
\end{table*}

\begin{table}[t]
  \centering
  \caption{Prediction error ($\times 10^{-2}$) on motion capture. Results averaged across 3 runs.}
  \resizebox{.6\linewidth}{!}{
    \begin{tabular}{lcccccccc}
    \toprule
     & GNN   & TFN   & SE(3)-Tr. & RF    & EGNN  & GMN \\
    \midrule
    Raw & 67.3\tiny{$\pm$1.1} & 66.9\tiny{$\pm$2.7} & 60.9\tiny{$\pm$0.9} & 197.0\tiny{$\pm$1.0} & 59.1\tiny{$\pm$2.1}  & 43.9\tiny{$\pm$1.1} \\
    \ourapp & N/A & N/A & N/A & 164.8\tiny{$\pm$1.7} & 55.8\tiny{$\pm$6.1}  & 30.0\tiny{$\pm$1.4} \\
    \ourappplus & N/A & N/A & N/A & \textbf{162.6\tiny{$\pm$0.8}} & \textbf{51.3\tiny{$\pm$4.0}} & \textbf{29.6\tiny{$\pm$0.8}}\\
    \bottomrule
    \end{tabular}}
  \label{tab:motion}
\end{table}%

\begin{figure*}[t]
	\centering
	\includegraphics[width=\linewidth]{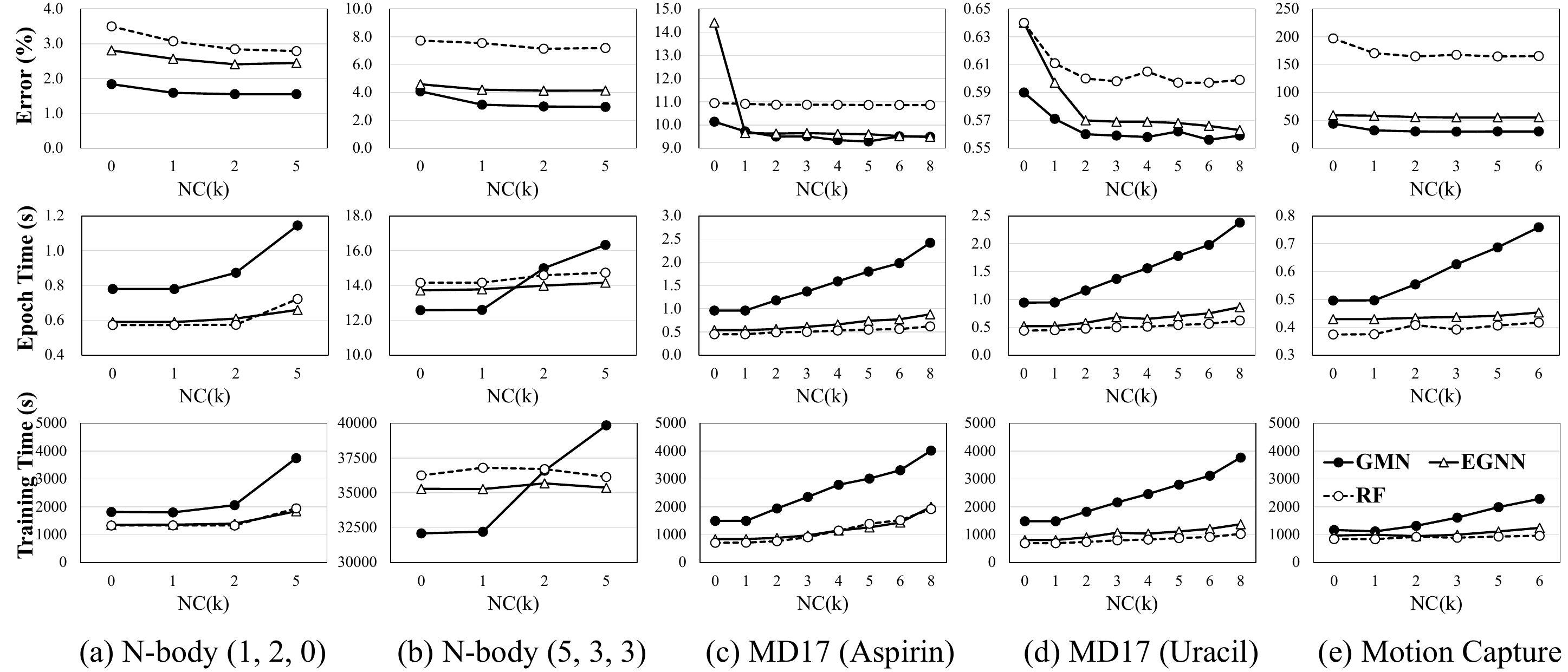}
	\caption{A detailed comparison of NCGNN without extra data on $5$ datasets, w.r.t. $k$, average of $3$ runs, conducted on a V100. The lines with dot, circle, and triangle denote the results of GMN, EGNN, and RF, respectively. The first, second, and third rows are the results of prediction error, training time of one epoch, and total training time, respectively.}
	\label{fig:n_step}
\end{figure*}

\begin{figure*}[t]
	\centering
	\includegraphics[width=.95\linewidth]{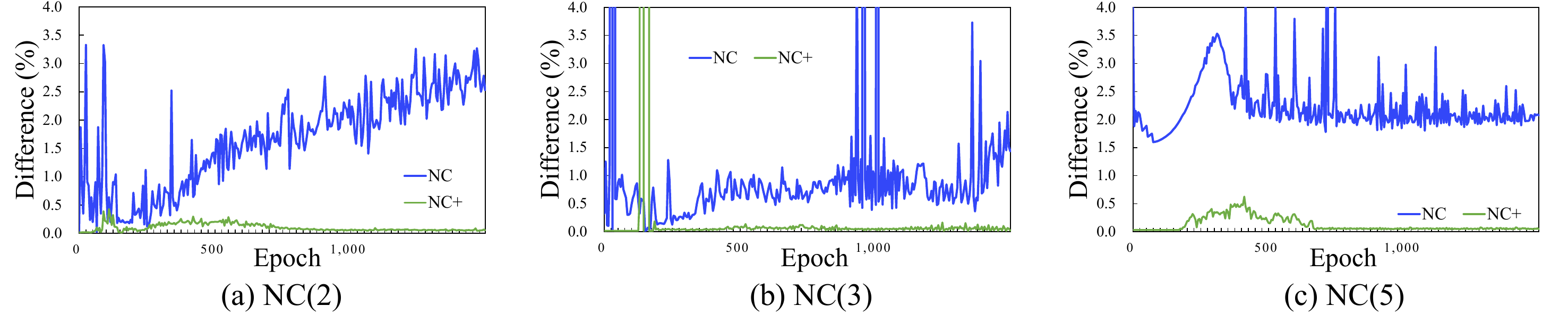}
	\caption{The prediction errors of intermediate velocities on valid set,  w.r.t. training epoch. The blue and green lines denote the prediction errors of \ourapp and \ourappplus, respectively.}
	\label{fig:comparison}
\end{figure*}

\begin{figure*}[t]
	\centering
	\includegraphics[width=\linewidth]{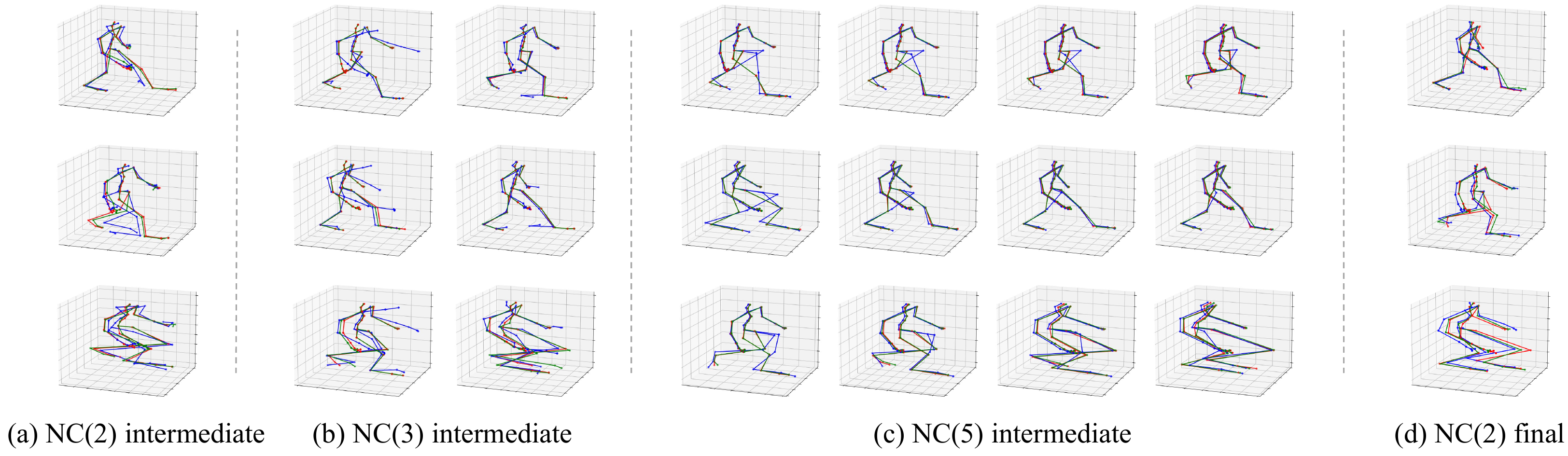}
	\caption{Visualization of the intermediate velocities w.r.t. $k$. The red, blue, and green lines denote the target, prediction of \ourapp, and prediction of \ourappplus, respectively.}
	\label{fig:comparison_vis}
\end{figure*}

\begin{figure*}[!t]
	\centering
	\includegraphics[width=\linewidth]{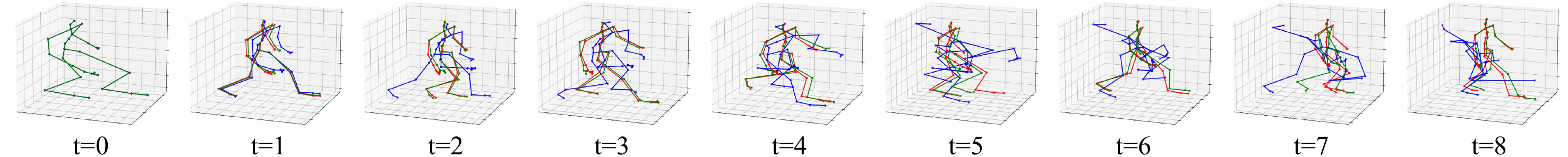}
	\caption{Consecutive predictions on the Motion dataset. The red, blue, and green lines denote the target, the prediction of \ourapp (0), and the prediction of \ourappplus (2), respectively. The interval between two figures is identical to the setting in the main experiment.}
	\label{fig:time_evolution}
\end{figure*}

\begin{table*}[t]
  \centering
  \caption{Multi-step prediction results ($\times 10^{-2}$) on the N-body and MD17 datasets. ADE and FDE denote average displacement error and final displacement error, respectively. The results of four baseline methods are from~\cite{EqMotion}.}
  \resizebox{\textwidth}{!}{
    \begin{tabular}{lrrrrrrrrrr}
    \toprule
         & \multicolumn{2}{c}{N-body (Springs)} & \multicolumn{2}{c}{MD17 (Aspirin)} & \multicolumn{2}{c}{MD17 (Benzene)} & \multicolumn{2}{c}{MD17 (Ethanol)} & \multicolumn{2}{c}{MD17 (Malonaldehyde)}\\ 
    \cmidrule(lr){2-3} \cmidrule(lr){4-5} \cmidrule(lr){6-7} \cmidrule(lr){8-9} \cmidrule(lr){10-11} & ADE & Accuracy  & ADE & FDE & ADE & FDE & ADE & FDE & ADE & FDE \\
    \midrule
    LSTM~\cite{LSTM} & - & 53.5  & 17.59 & 24.79  & 6.06 & 9.46 & 7.73 & 9.88 & 15.14 & 22.90\\
    NRI~\cite{NRI} & - & 93.0  & 12.60 & 18.50  & 1.89 & 2.58 & 6.69 & 8.78 & 12.79 & 19.86\\
    GroupNet~\cite{GroupNet} & - & -  & 10.62 & 14.00  & 2.02 & 2.95 & 6.00 & 7.88 & 7.99 & 12.49\\
    EqMotion~\cite{EqMotion} & 16.8 & 97.6  & 5.95 & 8.38  & 1.18 & 1.73 & 5.05 & 7.02 & 5.85 & 9.02\\
    \midrule
    NC (EqMotion) & \textbf{16.2} & \textbf{98.9} & \textbf{4.74} & \textbf{6.76} & \textbf{1.15} & \textbf{1.63} & \textbf{5.02} & \textbf{6.87} & \textbf{5.19} & \textbf{8.07}\\
    \bottomrule
    \end{tabular}
    }
  \label{tab:seq_pred}
\end{table*}%

\subsection{Datasets}
We leveraged three datasets towards different aspects of reasoning dynamics to exhaust the performance of \ourapp. We used the N-body simulation benchmark proposed in \cite{GMN}. Compared with the previous ones, this benchmark has more different settings, e.g., number of particles and training data. Furthermore, it also considers some physical constraints (e.g., hinges and sticks) widely existing in the real world. For molecular dynamics, We used MD17~\cite{MD17} that consists of the molecular dynamics simulations of eight different molecules as a scenario. We followed \cite{GMN} to construct the training/testing sets. We also considered reasoning human motion. Follow \cite{GMN}, the dataset was constructed based on $23$ trials in CMU Motion Capture Database \cite{Motion}.

As \ourappplus requires additional supervised data, we accordingly extracted more intermediate data points between the input and target from each dataset. It is worth noting that this operation was tractable as the sampling frequency of the datasets was much higher than our requirement.

\subsection{Main Results}
We reported the main experimental results w.r.t. three datasets in Tables~\ref{tab:nbody}, \ref{tab:md17}, and \ref{tab:motion}, respectively. Overall, \ourappplus significantly improved the performance of all baselines on almost all datasets and across all settings, which empirically demonstrates the generality as well as the effectiveness of the proposed method. Remarkably, \ourapp (without extra data) achieved slightly low results but still had great advantages over performance in comparison with three baselines.

Specifically, The average improvement on N-body datasets was most significant, where we observe a near 20\% decrease in prediction error for all three baseline methods. For example, \ourapp (RF) and \ourappplus (RF) greatly outperformed the original RF and even had better results than the original EGNN under some settings (e.g., 3,2,1). This phenomenon also happened on the molecular dynamics datasets, where we find that \ourappplus (EGNN) not only defeated its original version, but also the original GMN and even \ourappplus (GMN) on many settings. 

\subsection{Impact of estimation step $k$}
\label{sec:time_step_k}

We conducted experiments to explore how the performance changes with respect to the estimation step $k$. The experimental results are shown in Figure~\ref{fig:n_step}. 

For all three baselines, the performance improved rapidly as the growth of step $k$ from $0$ to $2$, but this advantage gradually vanished as we set larger $k$. Interestingly, the curve of training time showed an inverse trend, where we find that the total training time of \ourapp (5) with GMN was almost two times slower than the \ourapp (1) version. Therefore, we set $k=2$ in previous experiments to get the best balance between performance and training cost.

By comparing the prediction errors of different baselines, we find that GMN that considers the physical constraints (e.g., sticks and hinges) usually had better performance than the other two methods, but the gap significantly narrowed with the increase of step $k$. For example, on MD17 and motion datasets, EGNN obtained similar prediction errors for $k\geq 2$ while demanding less training time. This phenomenon demonstrates that predicting the integration with multiple estimations lowers the requirement for model capacity.

Overall, learning with \ourapp significantly reduced the prediction errors of different models. The additional training time was also acceptable in most cases.

\subsection{A Comparison between \ourapp and \ourappplus}
\label{sec:comparison}
The only difference between \ourapp and \ourappplus was the use of the regularization loss in \ourappplus to learn the intermediate velocities. Therefore, we designed experiments to compare the intermediate results and the final predictions of \ourapp and \ourappplus with GMN as the backbone model. 

We first tracked the average intermediate velocity prediction errors on valid datasets in terms of training epochs. The results are shown in Figure~\ref{fig:comparison}, from which we observe that the prediction errors were not huge for both two methods, especially \ourappplus obtained highly accurate prediction for the intermediate velocities with different $k$. For \ourapp, we actually find that it implicitly learned to model the intermediate velocities more or less, even though its prediction errors gradually improved or dramatically fluctuated during training. This observation empirically demonstrates the effectiveness of the intermediate velocities in predicting the final state of the system. We also reported the results on MD17 and N-body datasets in Appendix~\ref{app:other_results}.

We then visualized the intermediate velocities and the final predictions in Figure \ref{fig:comparison_vis}. The intermediate velocities were visualized by adding the velocity with the corresponding coordinate at the intermediate time point. From Figure~\ref{fig:comparison_vis}, we can find that \ourappplus (green ones) learned very accurate predictions with only a few non-overlapped nodes to the target nodes (red ones). For the loose version \ourapp, it learned good predictions for the backbone part, but failed on the limbs (especially the legs). The motion of the main backbone was usually stable, while that of the limbs involved swing and twisting. We also conducted experiments on the other two types of datasets, please see Figure~\ref{fig:comparison_vis_extra} in Appendix~\ref{app:other_results}.

\subsection{On the Time Evolution of Dynamic Systems}
In previous experiments, we show that \ourappplus was capable of learning highly accurate predictions of intermediate velocities. This inspired us to realize the full potential of \ourappplus by producing a series of predictions given only the initial state as input. For each time point (including the intermediate time points), we used the most recent $k+1$ points of averaged data as input to predict the next velocity and coordinate. This strategy produced much more stable predictions than the baseline \ourapp (0) -- directly using the last output as input to predict the next state.

The results are shown in Figure~\ref{fig:time_evolution}, from which we can find that both the baseline (blue lines) and \ourappplus (green lines) had good predictions at the initial steps. However, due to the increasing acculturated errors, the baseline soon collapsed and only the main backbone was roughly fit to the target. By contrast, \ourappplus was still able to produce promising results. The deviations mainly concentrated on the limbs. Therefore, we argue that \ourappplus can produce not only the highly accurate one-step prediction, but also the relatively reliable consecutive predictions. We also uploaded the .gif files of examples on all three datasets in Supplementary Materials, which demonstrated the consistent conclusion.

We also compared our method with the EGNN methods for multi-step prediction~\cite{EqMotion,NRI,GroupNet,LoCS}. Specifically, multi-step prediction takes a sequence of states with fixed interval as input and predicts a sequence of future states with same length and interval. As recent multi-step prediction methods are still based on sequence or encoder-decoder architectures, it would be interesting to examine the applicability of \ourapp to them. To this end, we adapted \ourapp ($k$=2) to the source code of the best-performing method EqMotion~\cite{EqMotion} and used identical settings for the hyper-parameters. 

The experimental results are shown in Table~\ref{tab:seq_pred}, where we can observe that \ourapp (EqMotion) outperformed the base model EqMotion on all datasets and across all metrics. It is worth noting that the archiecture and parameter-setting of the EGNN module were identical to EqMotion and \ourapp (EqMotion), which clearly demonstrates the effectiveness of the proposed method. We have also released the source code of NC (EqMotion) in our GitHub repository.

\subsection{Ablation Study}

\begin{wraptable}{r}{0.5\textwidth}
\vskip -0.1in
  \centering
  \caption{Ablation study results ($\times 10^{-2}$).}
  \resizebox{\linewidth}{!}{
    \begin{tabular}{lccc}
    \toprule
     & N-body (1,2,0)   & MD17 (Aspirin)   & Motion \\
    \midrule
    GMN &  1.84 & 10.14 & 43.9\\
    \ourapp (coeffs$=1$) & 1.78 & 9.64 & 48.4 \\
    depth-varying ODE & 1.82 & 10.08 & 42.2 \\
    \ourapp & 1.55 & 9.50 & 30.0\\
    \bottomrule
    \end{tabular}}
    \vskip -0.1in
  \label{tab:ablation_study}%
\end{wraptable}%

We conducted ablation studies to investigate the effectiveness of the proposed Newton-Cotes integration. We implemented two alternative methods: NC (coeffs$=1$), where we set all the coefficients for summing the predicted velocities as $1$; and depth-varying ODE, where we employed common neural ODE solvers from \cite{NeuralODE,GDE} to address our problem. We used an identical GMN as the backend EGNN model and conducted a simple hyper-parameter search for the learning rate and ODE solver.

The results are presented in Table~\ref{tab:ablation_study}. The two alternative methods methods generally performed better than the baseline GMN. The performance of depth-varying GDE was comparable to that of NC (coeffs$=1$), but both were lower than that of the original \ourapp. This observation further supports the effectiveness of our method.

\section{Conclusion}
In this paper, we propose \ourapp to tackle the problem of reasoning system dynamics. We show that the existing state-of-the-art methods can be derived to the basic form of \ourapp, and theoretically/empirically prove the effectiveness of high order \ourapp. Furthermore, with a few additional data provided, \ourappplus is capable of producing promising consecutive predictions. In future, we plan to study how to improve the ability of \ourappplus on the consecutive prediction task.

\section*{Acknowledgement}
We would like to thank all anonymous reviewers for their insightful and invaluable comments. This work is funded by NSFCU19B2027/NSFC91846204/NSFC62302433, National Key R\&D Program of China (Funding No.SQ2018YFC000004), Zhejiang Provincial Natural Science Foundation of China (No.LGG22F030011) and sponsored by CCF-Tencent Open Fund (CCF-Tencent RAGR20230122). 

\bibliography{reference_with_pagenumber}
\bibliographystyle{unsrtnat}

\clearpage
\appendix

\section{The Derivations of \ourapp (0) for Other EGNN Models}
\label{app:other_egnn}
In this section, we first illustrate how the other two baseline models (i.e. RF~\cite{RF} and GMN~\cite{GMN}) can be derived to \ourapp (0), and then discuss multi-layer EGNN.

\subsection{RF}
The overall equivariant convolution process of RF is similar to that of EGNN. The main difference is that RF does not use the node feature. Instead, it leverages the L2 norm of the velocity as an additional feature to update the predicted velocity:
\begin{align}
    \label{eq:rf}
    \hat{\rvv}_i^{0} &= \text{Norm}(\rvv_i^0)\big(\phi_v(\rvh_i)\rvv_i^0 + \frac{1}{N-1}\sum_{j\neq i }{(\rvx_i^0- \rvx_j^0)\rvm_{ij}^0}\big),\\
    \hat{\rvx}_i^{T} &= \rvx_i^0 + \hat{\rvv}_i^{0}T,
\end{align}
where $\text{Norm}(\rvv_i^0)$ denotes the L2 norm of the input velocity $\rvv_i^0$. Therefore, RF can be also viewed as a \ourapp (0) model.

\subsection{GMN}
The main difference between GMN and EGNN is that GMN detects the sub-structures in the system and process the particles in each special sub-structure independently. Specifically, GMN re-formulates the original EGNN (i.e., Equations~(\ref{eq:egnn_init}-\ref{eq:egnn}))  as follows:
\begin{align}
    \rvm_{ij}^0 &= \phi_e (\rvh_i, \rvh_j, ||\rvx_i^0 - \rvx_j^0||^2, e_{ij}),\\
    \hat{\rvv}_k^{0} &= \phi_v(\sum_{i\in \gS_k}\rvh_i)\rvv_k^0 + \sum_{i\in \gS_k}{\phi_k(\rvx_i^0,\rvm_{ij}^0)},\\
    \hat{\rvv}_i^{0} &= \text{FK}(\hat{\rvv}_k^{0}),\\
    \hat{\rvx}_i^{T} &= \rvx_i^0 + \hat{\rvv}_i^{0}T,
    \label{eq:gmn}
\end{align}
where $\hat{\rvv}_k^{0}$ is the predicted velocity of the sub-structure. GMN then uses it to calculate the velocity of each particle by a function $\text{FK}$ which can be either learnable or based on the angles and relative positions in the sub-structure~\cite{GMN}.

\subsection{Multi-layer EGNN}
In current EGNN methods, the input coordinate $\rvx_i^0$ and velocity $\rvv_i^0$ are regarded as constant feature and used in different layers. For example, in the multi-layer version EGNN, $\rvm_{ij}^0$ will be formulated as follows:
\begin{align}
    \rvm_{ij}^{0,l} &= \phi_e (\rvh_i^{l-1}, \rvh_j^{l-1}, ||\rvx_i^0 - \rvx_j^0||^2, e_{ij}),
    \label{eq:mutli_layer_egnn}
\end{align}
where $\rvm_{ij}^{0,l}$ is $\rvm_{ij}^0$ in layer $l$ and $\rvh_i^{l-1}$ denotes the hidden feature in layer $l-1$. Evidently, only the hidden features are updated within different layers. The overall formulation of multi-layer layer is akin to the single-layer EGNN.

\section{Proofs of Things}

\subsection{Proof of Proposition~\ref{prop:linear_mapping}}
\label{proof:linear_mapping}
\begin{proof}
The objective of the existing methods for a single system can be defined as:
\begin{align}
    \operatorname*{argmin}_{\hat{\rvv}^{0}} & \sum_{p_i}{ (\rvx_i^{T} - \hat{\rvx}_i^{T}) } \\
    =& \sum_{p_i}{ (\rvx_i^{T} - \rvx_i^0 - \hat{\rvv}_i^{0}T) }\\
    =& \sum_{p_i}{T(\frac{\rvx_i^{T} - \rvx_i^0}{T} - \hat{\rvv}_i^{0}})\\
    =& T\sum_{p_i}{\big(\rvv_i^{t^*} - (\phi_v(\rvh_i)\rvv_i^0 + \frac{\sum_{j\neq i }{(\rvx_i^0- \rvx_j^0)\rvm_{ij}^0}}{N-1} ) }\big)\\
    =& T\sum_{p_i}{\big(\rvv_i^{t^*} - (w^0 \rvv_i^0 + \rvb^0 })\big),
\end{align}
where $w^0 \in \sR^1$ and $\rvb^0 \in \sR^3$ denote the learnable variables irrelevant to $\rvv^0_i$ and $t$, concluding the proof. 
\end{proof}

\subsection{Proof of Proposition~\ref{prop:nck}}
\label{proof:nck}
\begin{proof}
As the higher order cases ($k\geq 1$) have already been proved, we only need to show that $\epsilon_{\text{NC}(0)} \geq \epsilon_{\text{NC}(1)}$. The first order of Newton-Cotes formula \ourapp (1) is also known as Trapezoidal rule, i.e.:
\begin{align}
    \int_0^T \rvv(t)dt \approx \frac{T}{2}(\rvv^0 + \rvv^T).
\end{align}

As aforementioned, the actual integration $\rvx^T- \rvx^0$ for different training examples is different, and we assume that it fluctuates around the base estimation $\frac{T}{2}(\rvv^0 + \rvv^T)$ and follows a normal distribution $\gN_{NC(1)}$, where the variance $\sigma^2_{NC(1)}$ is positively correlated with the difficulty of optimizing the overall objective. The variance of $\gN_{NC(1)}$ is:
\begin{align}
    \sigma^2_{NC(1)} &= \frac{ \sum_{p}{ (\int_0^T (\rvv(t) - \sum_{k=0}^1{C^k t^{(k)}}) dt)^2 } }{NT^2},
\end{align}
where the integration term is a general form of polynomial interpolation error. According to Equation~\ref{eq:interpolation}, it can be derived to:
\begin{align}
    \int_0^T (\frac{(t-t^0)(t-t^1)\rvv^{''}(\xi)}{2!})dt,
\end{align}
where $\rvv^{''}$ denote the second derivative of $\rvv$. Let $s = \frac{t-t^0}{T}$, then $t= t^0 + sh$ and $dt = d(\rvv^0+sh) = Tds$. the above equation can be re-written as:
\begin{align}
    T\int_0^1 \frac{s(s-1)T^2\rvv^{''}(\xi)}{2} ds 
    = -\frac{1}{12} T^3 \rvv^{''}(\xi) = \mathcal{O}(T^3).
\end{align}
Therefore, the final $\sigma^2_{NC(1)}$ is:
\begin{align}
    \sigma^2_{NC(1)} &= \frac{ \sum_{p}{ (-\frac{1}{12} T^3 \rvv^{''}(\xi))^2 } }{NT^2}\\
    &= \mathcal{O}(T^4) \leq  \mathcal{O}(T^2) = \sigma^2_{NC(0)},
\end{align}
concluding the proof.
\end{proof}

\begin{figure*}[t]
	\centering
	\includegraphics[width=.9\linewidth]{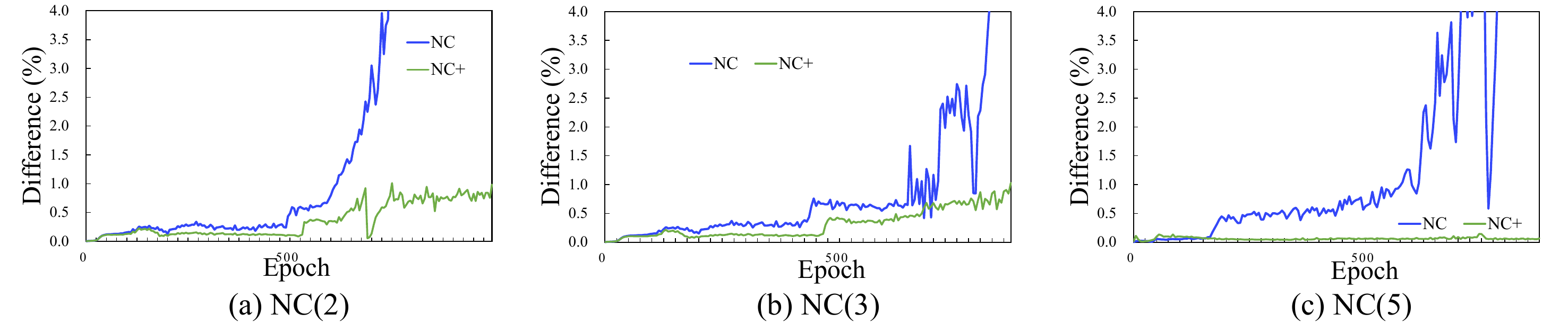}
	\caption{The prediction errors of intermediate velocities on MD17 dataset,  w.r.t. training epoch. The blue and green lines denote the prediction errors of \ourapp and \ourappplus, respectively.}
	\label{fig:comparison_md17}
\end{figure*}

\begin{figure}[t]
	\centering
	\includegraphics[width=.55\linewidth]{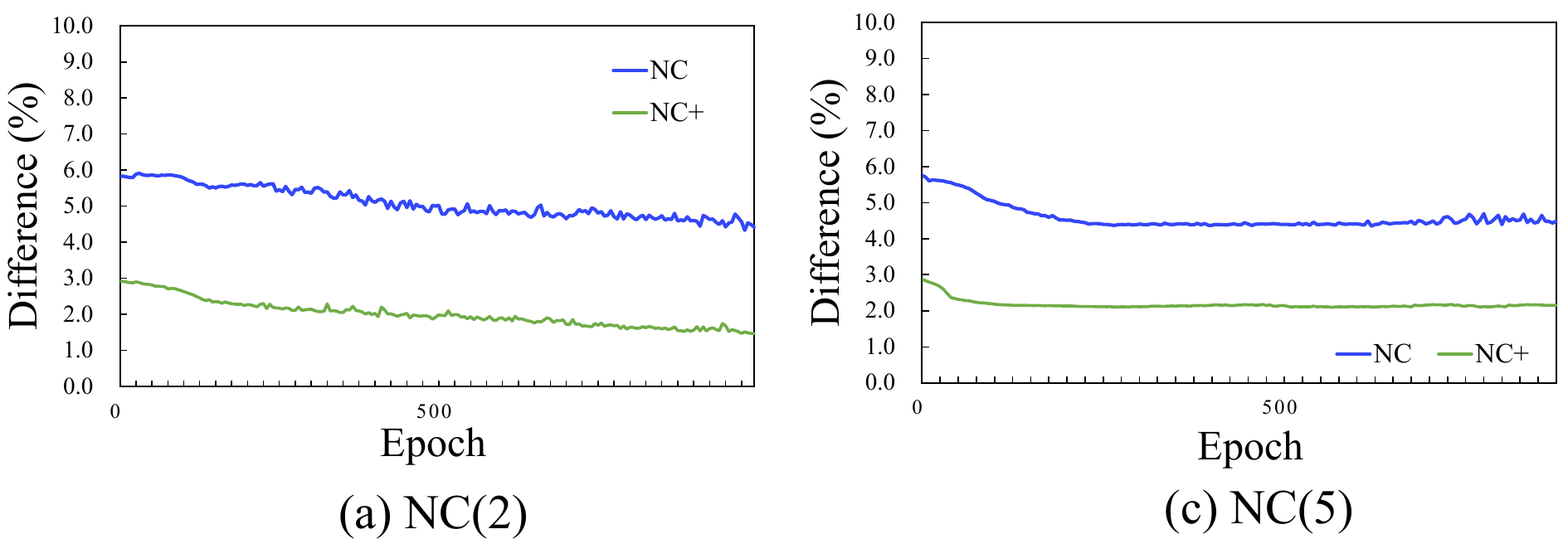}
	\caption{The prediction errors of intermediate velocities on N-body dataset,  w.r.t. training epoch. The blue and green lines denote the prediction errors of \ourapp and \ourappplus, respectively.}
	\label{fig:comparison_nobdy}
\end{figure}

\begin{figure*}[!t]
	\centering
	\includegraphics[width=.65\linewidth]{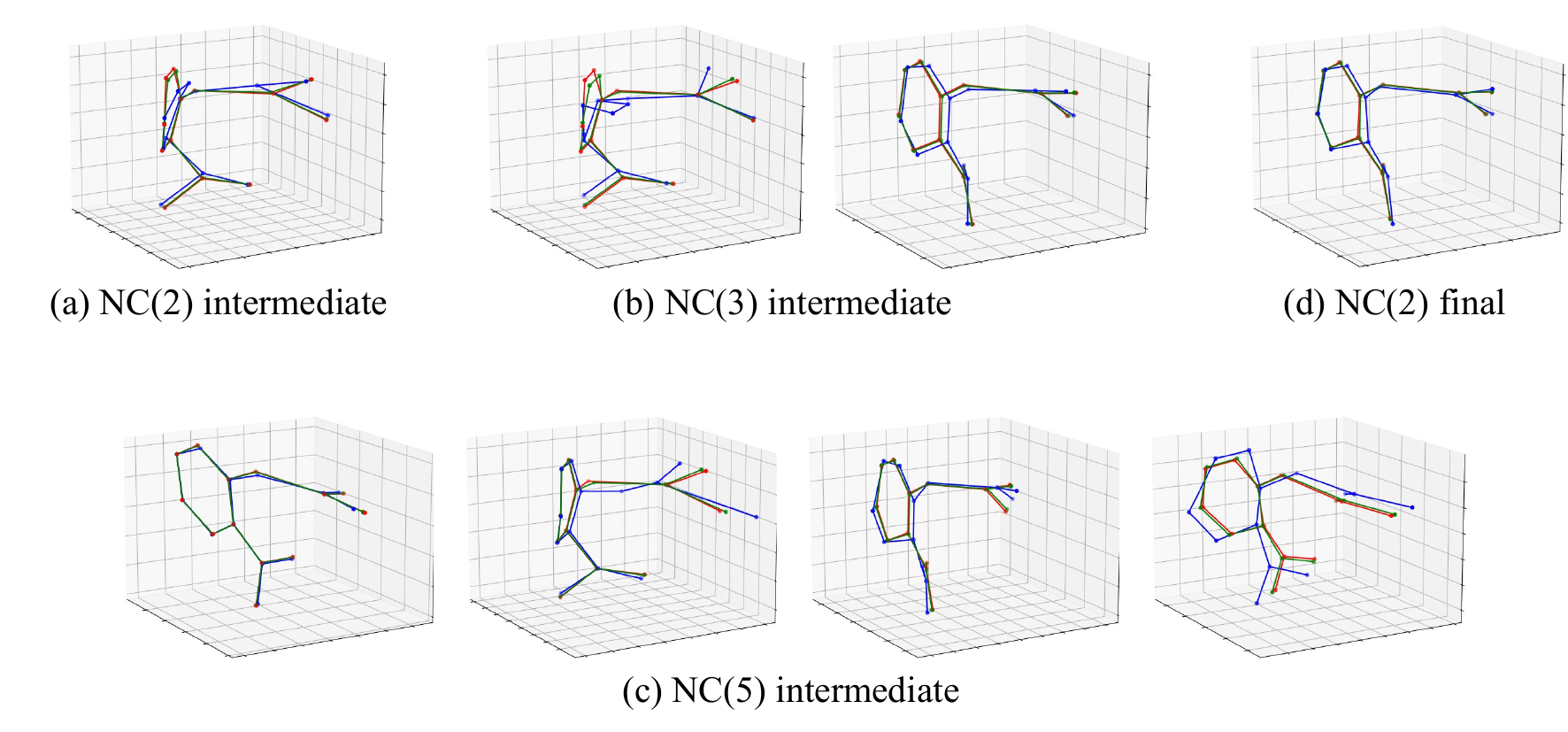}
	\caption{Visualization of the intermediate velocities w.r.t. $k$ of MD17 (Aspirin). The red, blue, and green lines denote the target, prediction of \ourapp, and prediction of \ourappplus, respectively.}
	\label{fig:comparison_vis_extra}
\end{figure*}

\begin{table}[t]
  \centering
  \small
  \caption{Hyper-parameter settings in the main experiments.}
  \resizebox{\textwidth}{!}{
    \begin{tabular}{lcccccccccc}
    \toprule
    Datasets & $k$ & \makecell{velocity \\regularization} & \makecell{velocity\\ regularization\\ decay} & \makecell{parameter \\regularization} & \makecell{parameter\\ regularization\\ decay} & \makecell{loss \\criterion}  & \makecell{input\\ feature \\ normalization} & \makecell{intermediate\\ velocity\\ normalization}\\
    \midrule
    N-body & 2 & 0.001  & 0.999 & 1.0 & 0.99 & MSE & False & True\\
    MD17 & 2 & 0.1 & 0.999 & 1.0 & 0.95 & MSE  & True & True\\
    Motion & 2 & 0.01 & 0.999 & 1.0 & 0.95 & MSE  & True & True\\
    \midrule
    \midrule
     & \# epoch & batch-size & \makecell{\# training\\ examples} & activation  & \# layers &  \makecell{learning \\ rate} & optimizer &  \makecell{clip \\ gradient \\ norm}\\
     \midrule
     N-body & 1,500 & 200 & 500  & ReLU & 4  & 0.0005 & Adam & 1.0 \\
    MD17 & 1,000 & 100 & 500  & ReLU & 4 & 0.001 & Adam & 0.1 \\
    Motion & 1,500 & 100 & 200  & ReLU & 4  & 0.0005 & Adam & 1.0 \\
    \bottomrule
    \end{tabular}}
  \label{tab:parameter}%
\end{table}%

\subsection{Proof of Proposition~\ref{prop:equivariance}}
\label{proof:equivariance}
\begin{proof}
The GNN models possessing equivariance property are equivariant to the translation, rotation, and permutation of input. \ourapp directly feeds the input into these backbone models and naturally possesses this property. 

Formally, let $T_g: \rvx \rightarrow \rvx$ be a group of transformation operations. If the backbone model $\gM$ is equivariant then we will have:
\begin{align}
    \gM(T_g(\rvx)) = S_g (\gM(\rvx)),
\end{align}
where $S_g$ is an equivalent transformation to $T_g$ on the output space. \ourapp can be regarded as a weighted combination of the outputs of $\gM$:
\begin{align}
    \sum_i{w_i \gM(T_g(\rvx_i)))} &= \sum_i{w_i S_g (\gM(\rvx_i)) },
\end{align}
where the Newton-Cotes weights $w_i$ is constant and irrelevant to the input, the output, and the model $\gM$ itself. Therefore, the above equation will always holds.
\end{proof}

\section{Additional Experimental Results}
\label{app:other_results}
The average intermediate velocity prediction errors on MD17 and Motion datasets are shown in Figure \ref{fig:comparison_md17} and Figure \ref{fig:comparison_nobdy}, respectively. \ourapp still learned the intermediate velocities we did not feed the intermediate into it. Particularly, the error on N-body dataset was small and stable, which may demonstrate the effectiveness of estimating intermediate velocities even without supervised data.

We also provide some visualized examples on MD17 dataset in Figure~\ref{fig:comparison_vis_extra}, from which we can observe the similar results compared with Figure~\ref{fig:comparison_vis} in the main paper. 

\section{Hyper-parameter Setting}
We list the main hyper-parameter setting of \ourapp with different EGNN models on different datasets in Table~\ref{tab:parameter}. We used the Adam optimizer~\cite{Adam} and adopted layer normalization~\cite{LayerNorm} and ReLU activation~\cite{ReLU}) for all settings. For a fair comparison, the parameter settings for the backbone EGNN models were identical to these in the existing implementation~\cite{GMN}.

\end{document}